\def\BibTeX{{\rm B\kern-.05em{\sc i\kern-.025em b}\kern-.08em
    T\kern-.1667em\lower.7ex\hbox{E}\kern-.125emX}}
\def\x{{\mathbf x}}
\def\cvg{{\text{cvg}}}
\def\w{{\mathbf w}}
\def\a{{\mathbf a}}
\def\v{{\mathbf v}}
\def\b{{\mathbf b}}
\def\A{{\mathbf A}}
\def\N{{\mathcal N}}
\def\Real{{\mathbb R}}
\DeclareMathOperator*{\argmin}{arg\,min}
\DeclareMathOperator{\ind}{I}
\DeclareMathOperator{\sgn}{sgn}
\newtheorem{proposition}{Proposition}
\newtheorem{lemma}{Lemma}
\newcommand{\removelatexerror}{\let\@latex@error\@gobble}
\begin{document}

\title{A Framework for Neural Network Pruning Using Gibbs Distributions
	\\
\thanks{The authors would like to thank Fujitsu Laboratories Ltd. and Fujitsu Consulting (Canada) Inc. for providing financial support for this project at the University of Toronto.}
}

\author{\IEEEauthorblockN{Alex Labach}
\IEEEauthorblockA{\textit{Department of Electrical and Computer Engineering} \\
\textit{University of Toronto}\\
Toronto, Canada\\
alex.labach@mail.utoronto.ca}
\and
\IEEEauthorblockN{Shahrokh Valaee}
\IEEEauthorblockA{\textit{Department of Electrical and Computer Engineering} \\
\textit{University of Toronto}\\
Toronto, Canada\\
valaee@ece.utoronto.ca}
}

\maketitle

\begin{abstract}
	Modern deep neural networks are often too large to use in many practical scenarios. Neural network pruning is an important technique for reducing the size of such models and accelerating inference. Gibbs pruning is a novel framework for expressing and designing neural network pruning methods. Combining approaches from statistical physics and stochastic regularization methods, it can train and prune a network simultaneously in such a way that the learned weights and pruning mask are well-adapted for each other. It can be used for structured or unstructured pruning and we propose a number of specific methods for each. We compare our proposed methods to a number of contemporary neural network pruning methods and find that Gibbs pruning outperforms them. In particular, we achieve a new state-of-the-art result for pruning ResNet-56 with the CIFAR-10 dataset.
\end{abstract}

\begin{IEEEkeywords}
	neural networks, neural network pruning, edge intelligence, machine learning
\end{IEEEkeywords}

\section{Introduction}
\label{sec:introduction}

Over the past decade, deep learning models have shown remarkable success in many types of data analysis, including image, speech, and text processing. But while they are powerful, state-of-the-art deep neural networks have grown massively in complexity. Using current models on resource-limited devices like smartphones, routers, other edge devices, and embedded devices is often impossible. Even running them at scale on more powerful computers can be prohibitively costly.

Neural network pruning is one way to alleviate this problem. Pruning techniques remove connections or weights within neural networks in such a way that inference accuracy is minimally affected. In many deep neural networks, the majority of connections can be removed without significantly reducing accuracy, which can speed up inference and reduce memory requirements considerably.

Neural network pruning methods can be divided into structured and unstructured pruning. In unstructured pruning, any set of connections can be removed. Structured pruning instead requires that the pruned connections follow some structure, such as removing entire rows from weight matrices or entire filters from convolutional layers. Structured pruning can usually improve performance metrics like inference speed at lower sparsity levels than unstructured pruning, since the structure can be taken advantage of in optimizing performance. On the other hand, unstructured pruning can generally remove more connections without significantly decreasing accuracy.

This paper introduces Gibbs pruning, a family of neural network pruning methods for structured and unstructured pruning that take inspiration from statistical physics.
We apply the concept of Gibbs distributions, which describe stochastic models that take the form of random fields. Gibbs distributions are highly flexible in terms of network properties that they can express, and quadratic energy functions, such as Ising models, can capture parameter interactions to induce structures required for structured pruning. Past work in machine learning and signal processing has applied Gibbs distribtions or similar concepts for parameterizing models~\cite{ackley1985learning}, image restoration~\cite{geman1984stochastic}, and optimization~\cite{kirkpatrick1983optimization}. To the best of our knowledge, Gibbs pruning is the first application of Gibbs distributions for neural network pruning.

Gibbs pruning uses an approach related to stochastic regularization methods like dropout~\cite{hinton2012improving,labach2019survey} in adding random behaviour during training to push the network towards a desirable representation. We induce a Gibbs distribution over the weights of a neural network and sample from it during training to determine pruning masks. This procedure leads to a learned network structure that is resilient to high degrees of pruning. We also take advantage of the temperature parameter in the Gibbs distribution to anneal the distribution, gradually converging to a final pruning mask during training that is well-adapted to the network.

The original contributions to neural network pruning research in this paper are as follows:

\begin{itemize}
	\item A versatile framework for expressing, developing, and understanding neural network pruning methods that connects them to stochastic regularization methods and concepts from statistical physics.
	\item A set of novel neural network pruning methods for both unstructured and structured pruning that outperform existing methods. We establish a new state-of-the-art result for pruning ResNet-56 with the CIFAR-10 dataset.
\end{itemize}

An earlier version of this work has been released as a preprint~\cite{labach2020framework} and has been accepted for publication in the 2020 IEEE Global Communications Conference (GLOBECOM). Changes from the earlier work include proposing more possible Hamiltonian functions, discussing their mathematical properties in more detail, and comparing their performance, as well as providing an extended literature review and more detailed analysis of our results.

This paper is organized as follows. Section \ref{chap:related} is a survey of relevant related works in the area of neural network pruning. Section \ref{chap:theory} gives a formal description of the Gibbs pruning framework and explores various possible pruning methods that can defined within this framework from a mathematical perspective. Section \ref{chap:evaluation} presents the results of various experiments carried out to explore the design space of Gibbs pruning methods and to compare them to existing pruning methods. Finally, Section \ref{chap:conclusion} summarizes our main findings and suggests possible future research directions.

\section{Related Work}
\label{chap:related}

A wide variety of methods for accelerating neural networks by modifying models have been proposed. Broadly speaking, they fall into the following categories.

\begin{itemize}
	\item \textbf{Network pruning}: The removal of connections during or after training.
	\item \textbf{Small models}: Neural network architectures that have been manually designed to approach the performance of large networks while having many fewer parameters, e.g., \cite{howard2017mobilenets, iandola2016squeezenet}.
	\item \textbf{Weight quantization}: Limiting the set of values that weights can take to reduce their memory usage and/or speed up mathematical operations, e.g., \cite{hubara2017quantized,han2016deep}.
	\item \textbf{Tensor decomposition}: Approximating linear transformations within neural networks as products of smaller tensors, e.g., \cite{denton2014exploiting,denil2013predicting}.
	\item \textbf{Knowledge distillation}: Training a small neural network to approximate the function implemented by a large, trained neural network, e.g., \cite{hinton2015distilling,romero2014fitnets}.
\end{itemize}

In practice, multiple methods can be used together to further improve performance. For instance, in \cite{han2016deep}, the authors propose using network pruning alongside weight quantization and Huffman coding to massively reduce the storage requirements of a neural network.

Neural network pruning research dates back to at least the 1980s~\cite{sietsma1988neural}. Since 2015, there has been a resurgence of neural network pruning research focused on developing methods for modern DNNs, which is the line of research that this paper falls into. In this section, we classify existing pruning methods in a number of ways to explain how our proposed method relates to contemporary research.

\subsection{Structured vs. Unstructured Pruning}

Unstructured pruning methods \cite{frankle2019stabilizing,frankle2019lottery,zhu2017prune,guo2016dynamic,han2015learning,lecun1990optimal,dong2017learning,molchanov2017variational,louizos2018learning} are generally usable for any neural network topology, since they are not limited in terms of which connections they prune. Structured pruning methods \cite{li2017pruning,luo2017thinet,molchanov2016pruning,he2018soft,he2019filter,he2017channel,hu2016network,wen2016learning,liu2017learning,ye2018rethinking,yu2018nisp,zhuang2018discrimination,hacene2019attention} are more specific to particular topologies, with nearly all recent methods being designed for CNNs. The most common structures to prune are entire convolutional filters or channels, effectively reducing the size of one tensor dimension for both weight and activation tensors as previously discussed. However, other structures are possible, including pruning entire convolutional kernels (e.g., \cite{molchanov2016pruning}) or pruning all but one connection in a kernel, allowing the convolution operation to be replaced by an operation that simply scales and shifts the input~\cite{hacene2019attention}.

While some existing pruning methods can be used for both structured and unstructured pruning \cite{gomez2019learning,he2018amc}, the vast majority are specialized for one or the other task. Our proposed method defines a framework that can be used for unstructured or structured pruning with arbitrary structures within a layer, making it much more versatile than most existing pruning methods.

\subsection{Post-training vs. In-Training Pruning}

Most pruning methods take a trained neural network as an input, prune some connections, and then \textbf{fine-tune} the network: repeating the training procedure for a number of epochs to recover the lost performance \cite{frankle2019lottery,frankle2019stabilizing,guo2016dynamic,han2015learning,lecun1990optimal,li2017pruning,luo2017thinet,molchanov2016pruning,dong2017learning,he2018amc,he2017channel,hu2016network,yu2018nisp,zhuang2018discrimination}. Pruning and fine-tuning may be repeated multiple times. In contrast, some pruning methods start with an untrained network and perform pruning or adapt the network for pruning during the training procedure \cite{zhu2017prune,gomez2019learning,wen2016learning,he2018soft,he2019filter,liu2017learning,ye2018rethinking,hacene2019attention,molchanov2017variational,louizos2018learning}. We call these approaches \textbf{post-training} and \textbf{in-training} pruning respectively. When a trained network is available, post-training pruning methods usually have an advantage in that fine-tuning is faster than training from scratch would be. But when a trained network is not available, in-training pruning methods will usually be faster since they do not add additional training epochs for fine-tuning. Our proposed method uses in-training pruning, since we believe that this approach can help the training procedure arrive at a final representation that is more robust to pruning than an ordinary trained network would be.

\subsection{Pruning Heuristics}

A central part of neural network pruning research has been developing metrics for deciding how important a connection is to a network's performance. The simplest and most common heuristic is weight magnitude: connections with weights close to zero are pruned \cite{frankle2019lottery,frankle2019stabilizing,zhu2017prune,gomez2019learning,guo2016dynamic,han2015learning,li2017pruning,wen2016learning,he2018soft,he2018amc}. More advanced heuristics have also been developed, such as using the Hessian of the loss with respect to the weights \cite{lecun1990optimal,dong2017learning}, a Taylor expansion approximation of the loss with respect to the weights~\cite{molchanov2016pruning}, and measuring the redundancy of parts of the network~\cite{he2019filter}. Other pruning methods have used sample inputs from the training set to estimate the importance of parameters~\cite{luo2017thinet,hu2016network}.

In \cite{gale2019state}, the authors compare methods using more advanced heuristics to the approach of pruning based on weight magnitude and find that the more complex heuristics do not provide an advantage in practice. This is in line with a number of recent papers that have gone back to using weight magnitude and have focused instead on changing how and when pruning is applied to the network~\cite{gomez2019learning,frankle2019lottery,frankle2019stabilizing}, which has arguably proved to be a more important concern. Our proposed method follows this approach.

~

On the whole, the most closely related method to Gibbs pruning is targeted dropout~\cite{gomez2019learning}, which is also an in-training pruning method based on weight magnitude that can be used for structured or unstructured pruning. Like our proposed method, it takes inspiration from stochastic regularization methods like dropout in stochastically pruning during training to make the network robust under pruning. We compare our method to it experimentally in Section~\ref{chap:evaluation}.

\subsection{Retraining with Random Initialization}

In \cite{liu2019rethinking}, the authors made the surprising observation that when using many established pruning methods, if the pruned network weights were re-initialized randomly and trained from scratch, the resulting accuracy would match or surpass that of the pruned network. While existing pruning methods were effective at determining effective small architectures, it turned out to be more effective to train these architectures normally rather than starting with a large network and pruning it. This finding was a major challenge to the value of network pruning as a compression technique.

A number of recent papers have replicated this experiment with new pruning methods and have shown that they outperform retraining with random initialization, in particular methods like ours that focus more on co-adapting network weights and pruning masks during the training and pruning procedures~\cite{gomez2019learning,frankle2019stabilizing,gale2019state}. We also demonstrate this result with our proposed method in Section \ref{sec:ev-re-initialization}, making it part of a new generation of pruning methods that can be shown to outperform random re-initialization.

\section{Gibbs Pruning}
\label{chap:theory}

Gibbs pruning is a layer-wise pruning method, meaning that every layer in a neural network is pruned independently. Note that we do not prune regularization or pooling layers, and we may omit other layers from pruning if they have very few weights and pruning them will significantly harm performance. Since Gibbs pruning acts on individual layers, we describe it in terms of a single layer. Let \(\w \in \Real^N\) represent the layer's weights, flattened into a single vector in an arbitrary order. Let \(\x \in \{-1,1\}^N\) represent a \textbf{pruning mask} for the layer, where \(x_i=-1\) means that \(w_i\) is pruned and \(x_i=1\) means that \(w_i\) is not pruned. We always use \(N\) to denote the number of weights in a layer.

In Gibbs pruning, \(\x\) is sampled from a \textbf{Gibbs distribution} \(p(\x)\) at every training step and the layer weights are either retained or masked out during the forward pass based on the corresponding values in \(\x\). The distribution is sampled once again after training to determine the final pruning mask. Note that weight values are not permanently modified during training, allowing a connection to be masked out in some iterations and active in others. Per-iteration mask sampling is illustrated in Figure \ref{fig:prob-pruning}.

A Gibbs distribution over \(\x\) has the probability mass function:
\begin{equation}
	p(\x) = \frac{1}{Z(\beta)}e^{-\beta H(\x)},
	\label{eq:gibbs}
\end{equation}
where \(\beta \in (0,\infty)\) is an inverse temperature parameter, which controls the overall tendency of the distribution to take a low-energy state, and \(Z(\beta)\) is a partition function, which is set to ensure that the probability mass function sums to 1.

Many applications of Gibbs distributions gradually increase \(\beta\) over many sampling iterations to move the distribution from a disordered state to an ordered state. This procedure is called \textbf{annealing}, by analogy to annealing heat treatments in metallurgy. The value of \(\beta\) is inversely proportional to a simulated system temperature, so annealing can be understood as gradually reducing the temperature of the system over time. Using annealing requires defining a schedule for changing \(\beta\) over time, which we call the \textbf{annealing schedule}.

The main idea behind our proposed method is to gradually converge to a pruning mask during the course of an ordinary neural network training procedure. This allows the weights of the neural network and the pruning mask to dynamically adapt to one another. We hypothesize that this approach is more effective than either training a smaller network or pruning a network that is already trained, even if fine-tuning is used. In particular, we want to train the network weights in such a way that they are resilient to pruning, which is something that is not generally done in pruning methods that prune trained networks and then perform fine-tuning.

Gibbs distributions are a natural fit for this kind of pruning. They permit annealing, which allows for gradual convergence from unpredictable behaviour to a final steady state in a controllable way. They are also a very general family of probability distributions, giving considerable flexibility in defining pruning methods. For instance, every Markov random field corresponds to a Gibbs distribution, and vice versa~\cite{geman1984stochastic}.

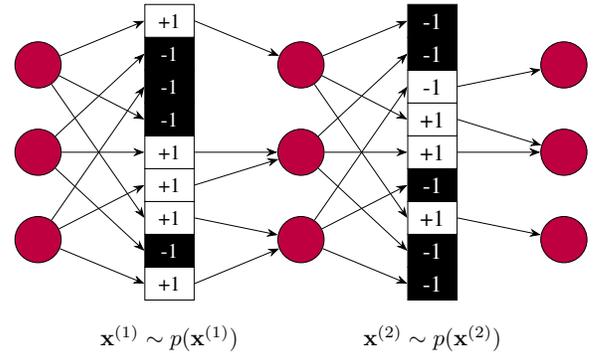
\begin{figure}[t!]
	\centering
	\begin{adjustbox}{width=0.42\textwidth}
	\begin{tikzpicture}[scale=1.0]
		\node[circle,minimum size=20,draw,fill=purple](1) at (0,0) {};
		\node[circle,minimum size=20,draw,fill=purple](2) at (0,1.33) {};
		\node[circle,minimum size=20,draw,fill=purple](3) at (0,2.66) {};
		\node[rectangle,minimum width=0.75cm,minimum height=0.5cm,draw,fill=white](s1)  at (2,-0.67) {\small +1};
		\node[rectangle,minimum width=0.75cm,minimum height=0.5cm,draw,fill=black,text=white](s2)  at (2,-0.17) {\small -1};
		\node[rectangle,minimum width=0.75cm,minimum height=0.5cm,draw,fill=white](s3)  at (2,0.33) {\small +1};
		\node[rectangle,minimum width=0.75cm,minimum height=0.5cm,draw,fill=white](s4)  at (2,0.83) {\small +1};
		\node[rectangle,minimum width=0.75cm,minimum height=0.5cm,draw,fill=white](s5)  at (2,1.33) {\small +1};
		\node[rectangle,minimum width=0.75cm,minimum height=0.5cm,draw,fill=black,text=white](s6)  at (2,1.83) {\small -1};
		\node[rectangle,minimum width=0.75cm,minimum height=0.5cm,draw,fill=black,text=white](s7)  at (2,2.33) {\small -1};
		\node[rectangle,minimum width=0.75cm,minimum height=0.5cm,draw,fill=black,text=white](s8)  at (2,2.83) {\small -1};
		\node[rectangle,minimum width=0.75cm,minimum height=0.5cm,draw,fill=white](s9)  at (2,3.33) {\small +1};
		\draw [-Stealth] (1) -- (s1.west); \draw [-Stealth] (1) -- (s4.west); \draw [-Stealth] (1) -- (s7.west);
		\draw [-Stealth] (2) -- (s2.west); \draw [-Stealth] (2) -- (s5.west); \draw [-Stealth] (2) -- (s8.west);
		\draw [-Stealth] (3) -- (s3.west); \draw [-Stealth] (3) -- (s6.west); \draw [-Stealth] (3) -- (s9.west);
		\node[circle,minimum size=20,draw,fill=purple](6) at (4,0) {};
		\node[circle,minimum size=20,draw,fill=purple](7) at (4,1.33) {};
		\node[circle,minimum size=20,draw,fill=purple](8) at (4,2.66) {};
		\draw [-Stealth] (s1.east) -- (6); \draw [-Stealth] (s3.east) -- (6);
		\draw [-Stealth] (s4.east) -- (7); \draw [-Stealth] (s5.east) -- (7);
		\draw [-Stealth] (s9.east) -- (8);
		\node[rectangle,minimum width=0.75cm,minimum height=0.5cm,draw,fill=black,text=white](s10)  at (6,-0.67) {-1};
		\node[rectangle,minimum width=0.75cm,minimum height=0.5cm,draw,fill=black,text=white](s11)  at (6,-0.17) {-1};
		\node[rectangle,minimum width=0.75cm,minimum height=0.5cm,draw,fill=white](s12)  at (6,0.33) {+1};
		\node[rectangle,minimum width=0.75cm,minimum height=0.5cm,draw,fill=black,text=white](s13)  at (6,0.83) {-1};
		\node[rectangle,minimum width=0.75cm,minimum height=0.5cm,draw,fill=white](s14)  at (6,1.33) {+1};
		\node[rectangle,minimum width=0.75cm,minimum height=0.5cm,draw,fill=white](s15)  at (6,1.83) {+1};
		\node[rectangle,minimum width=0.75cm,minimum height=0.5cm,draw,fill=white](s16)  at (6,2.33) {-1};
		\node[rectangle,minimum width=0.75cm,minimum height=0.5cm,draw,fill=black,text=white](s17)  at (6,2.83) {-1};
		\node[rectangle,minimum width=0.75cm,minimum height=0.5cm,draw,fill=black,text=white](s18)  at (6,3.33) {-1};
		\draw [-Stealth] (6) -- (s10.west); \draw [-Stealth] (6) -- (s13.west); \draw [-Stealth] (6) -- (s16.west);
		\draw [-Stealth] (7) -- (s11.west); \draw [-Stealth] (7) -- (s14.west); \draw [-Stealth] (7) -- (s17.west);
		\draw [-Stealth] (8) -- (s12.west); \draw [-Stealth] (8) -- (s15.west); \draw [-Stealth] (8) -- (s18.west);
		\node[circle,minimum size=20,draw,fill=purple](9) at  (8,0) {};
		\node[circle,minimum size=20,draw,fill=purple](10) at (8,1.33) {};
		\node[circle,minimum size=20,draw,fill=purple](11) at (8,2.66) {};
		\draw [-Stealth] (s12.east) -- (9);
		\draw [-Stealth] (s14.east) -- (10); \draw [-Stealth] (s15.east) -- (10);
		\draw [-Stealth] (s16.east) -- (11);

		\node[] at (2,-1.5) {\(\x^{(1)} \sim p(\x^{(1)})\)};
		\node[] at (6,-1.5) {\(\x^{(2)} \sim p(\x^{(2)})\)};
	\end{tikzpicture}
	\end{adjustbox}
	\caption{Connection pruning in a neural network using probability distributions over pruning masks.}
	\label{fig:prob-pruning}
\end{figure}

This high-level description provides a framework in which a wide range of pruning methods could be expressed.
In order to define an exact method, three elements need to be determined:
\begin{itemize}
	\item The Hamiltonian \(H(\x)\);
	\item An approach for annealing \(\beta\) during training;
	\item A method for sampling from the distribution.
\end{itemize}

\subsection{Properties of Gibbs Distributions}
\label{sec:gp-gibbs-properties}

As previously stated, Gibbs distributions can be annealed from behaving unpredictably to converging to a known final state. To be precise, at a high temperature or as \(\beta \rightarrow 0\), the distribution is effectively uniform over all \(\x\), and at a low temperature or as \(\beta \rightarrow \infty\), samples from the distribution converge to only those \(\x\) values that minimize \(H(\x)\)~\cite{geman1984stochastic}.

In order to define the Hamiltonian, we consider what the final converged state of the pruning mask should be. We can use some heuristic to determine a pruning mask that is likely be effective and set our Hamiltonian so that this state minimizes \(H(\x)\), which allows us to ensure that the annealing procedure will arrive at that state as \(\beta \rightarrow \infty\).

Aside from this constraint, the Hamiltonian could take any form, and could incorporate information like the current network parameters, the input data or intermediate values in the network, the initial network parameters, etc. While past pruning methods have considered a range of similar factors, we follow a number of contemporary works that find that simply looking at current weight magnitudes is enough to effectively prune networks~\cite{gale2019state,frankle2019lottery,han2016deep,guo2016dynamic}, and can outperform more complex metrics~\cite{gale2019state}. So, our Hamiltonians are only functions of the current weight magnitudes, the mask \(\x\), and some hyperparameters that are held constant. In Sections \ref{sec:gp-unstruct} and \ref{sec:gp-struct}, we explore possible specific functions to use for \(H(\x)\), and we test their efficacies experimentally in Section \ref{chap:evaluation}.

We perform annealing by increasing \(\beta\) from a low value to a high value while training. At high temperatures (low \(\beta\)), \(\x\) is roughly uniformly distributed over \(\{-1,1\}^N\), which acts similarly to stochastic regularization methods and in particular dropconnect~\cite{wan2013regularization}. This phase regularizes the network and conditions it to be robust under weight pruning. Once annealed to a low temperature (high \(\beta\)), the Gibbs distribution converges towards our desired final pruning mask. A significant amount of later training time can therefore be spent adapting to the particular structure of the pruning mask. The balance between time spent on adapting to random masks and time spent on adapting to the final mask can be controlled by the particular annealing schedule for \(\beta\) used in training.

In general, Gibbs distributions are difficult to sample from. This is because the partition function \(Z(\beta)\) is a sum containing a term for each possible value of \(\x\), and therefore can grow exponentially in complexity in \(N\). However, for some Hamiltonians, the distribution simplifies in ways that enable easy sampling. For others, Markov chain Monte Carlo (MCMC) methods that allow sampling without explicitly calculating the partition function must be used. We propose sampling methods for each Hamiltonian we use in the following sections.

Another useful property of Gibbs distributions is that a Hamiltonian \(H(\x)=f(\x)+\gamma\) with some constant \(\gamma\in\Real\) produces the same distribution as the Hamiltonian \(H(\x)=f(\x)\). This property holds because \(e^{\beta H(\x)}\) turns the constant additive factor to a constant multiplicative factor, which is then cancelled out by the partition function.

\subsection{Definitions}
\label{sec:gp-defns}

\(p\in[0,1]\) represents the pruning fraction, that is the fraction of weights that are removed in the final pruned layer. % Some texts use compression rate instead, which has the following relationship to \(p\):
% \begin{equation}
% 	p = 1-\frac{1}{\text{compression rate}}.
% \end{equation}
For theoretical analysis, we require that \(p\) represent a pruning fraction that is achievable given the number of pruned weights or structures in the layer. For instance, if a layer contained four filters, pruning 80\% of filters (\(p=0.8\)) would be impossible. In practice, the number of weights is often large enough that achieving a pruning fraction of \(p\) can be closely approximated without having to exactly meet it.

We define an operation \(Q(p,\w)\) as follows. Let \(\v\) represent a vector with the elements of \(\w\) sorted by magnitude such that \(v_i^2 \leq v_{i+1}^2\). Given a pruning fraction \(p\) between 0 and 1, select \(Q(p,\w) = v_i^2\), where \(i = p(N-1)+1\). If \(i\) is not an integer for the given value of \(p\), we instead linearly interpolate between the two nearest defined points \(Q(p^-,\w)\) and \(Q(p^+,\w)\):
\begin{gather}
	\text{Let } p^- = \left\lfloor\frac{i-1}{N-1}\right\rfloor,\quad p^+ = \left\lceil\frac{i-1}{N-1}\right\rceil, \nonumber \\
    Q(p,\w) = Q(p^-,\w) + \frac{p - p^-}{p^+ - p^-}\left(Q(p^+,\w) - Q(p^-,\w)\right).
\end{gather}
In effect, $Q(p,\w)$ is  the empirical $p$th quantile of the squared magnitudes of \(\w\).

We define a neighbourhood as a set of indices corresponding to a subset of weights in the neural network layer. Let \(\N_k\) represent the \(k\)th neighbourhood and let \(M\) represent the total number of neighbourhoods. The neighbourhoods in a layer form a partition of the set of weight indices, i.e., neighbourhoods are disjoint and the union of all neighbourhoods is the set of all weight indices. In practice, we define neighbourhoods so as to correspond to structures like kernels or filters in a neural network layer.
We also define a vector \(\bar{\w}\) containing the root mean squared weight value for each neighbourhood:
\begin{equation}
\bar{w}_k^2 = \frac{1}{|\N_k|}
\sum_{i\in \N_k}w_i^2.
\end{equation}

We use \(\ind[\cdot]\) to represent the indicator function, which takes the value 1 if its argument is true and 0 if it is false.

\subsection{Unstructured Pruning}
\label{sec:gp-unstruct}

Given the flexibility of Gibbs distributions, we have a massive range of Hamiltonians that could possibly work for pruning. In this section, we justify our choices of Hamiltonians for unstructured pruning by mathematically expressing requirements for pruning based on weight magnitudes, listing some heuristic design principles to help narrow down our search, and exploring some families of functions that can satisfy our requirements.

We use the heuristic that weights with lower magnitude are less important to the network's behaviour. So for unstructured pruning with a given \(p\), we design our Hamiltonian to converge to a state where the fraction \(p\) of weights with lowest magnitude are pruned. Let \(\x_\cvg\) represent this state:
\begin{equation}
	\x_\cvg = \argmin_\x \left[\x^T\w^2\right] \text{ s.t. } \frac{\sum_i \ind[x_i = -1]}{N} = p,
\end{equation}
where \(\w^2\) represents the element-wise square of \(\w\). If there are multiple states satisfying these conditions, we select one arbitrarily.
To converge to this state, the Hamiltonian must therefore fulfil the requirement:
\begin{equation}
	\argmin_\x H(\x) = \x_\cvg.
\label{eq:unstruct_req}
\end{equation}
\(\x_\cvg\) can alternatively be expressed in terms of \(Q(p,\w)\):
\begin{equation}
	x_{\cvg,i} = -1 + 2\ind[w_i^2 > Q(p,\w)].
\end{equation}

We have a wide design latitude outside of this requirement. Design heuristics that guide our choice of Hamiltonians include:
\begin{itemize}
	\item Mathematical simplicity.
	\item Ease and speed of sampling from the resulting Gibbs distribution.
	\item Smooth convergence from random behaviour to the converged state. We hypothesize that this results in a better adapted network than sudden changes.
\end{itemize}
We now examine types of Hamiltonians that might make sense given these goals.

\subsubsection{Binary Hamiltonian}

An obvious simple Hamiltonian that fulfils (\ref{eq:unstruct_req}) is one that has a constant low energy when the mask is in the converged state, and a constant high energy otherwise:
\begin{equation}
	H(\x) =  \alpha\ind[\x \neq \x_\cvg] + \gamma, \quad \alpha > 0.
	\label{eq:bin_general}
\end{equation}
As previously discussed, constant additive factors in the Hamiltonian do not change the distribution, so \(\gamma\) does not matter. We can also combine \(\alpha\) into \(\beta\) when selecting our annealing schedule. Therefore, without loss of generality, we can consider:
\begin{equation}
	H(\x) =  \ind[\x \neq \x_\cvg].
	\label{eq:bin}
\end{equation}

Sampling from this distribution is easy, since it can be done by selecting \(\x_\cvg\) with some probability \(p_{\text{cvg}}\), and sampling uniformly from \(\{-1,1\}^N\) with probability \(1-p_{\text{cvg}}\). The exact probability is as follows.
\begin{lemma}
	Using a sampler for (\ref{eq:bin}) that selects \(\x_\cvg\) with probability \(p_{\text{cvg}}\) and  otherwise samples uniformly from \(\{-1,1\}^N\), \(p_\cvg = \frac{1-e^{-\beta}}{(2^N-1)e^{-\beta}+1}\).
	\label{lemma:bin-p_cvg}
\end{lemma}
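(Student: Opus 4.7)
The plan is a direct moment-matching argument between the target Gibbs distribution and the distribution induced by the proposed sampler. First I would compute the partition function of the distribution induced by $H(\x)=\ind[\x\neq\x_\cvg]$. There are $2^N$ possible masks; exactly one of them (namely $\x_\cvg$) contributes $e^{0}=1$ to the sum, and the remaining $2^N-1$ each contribute $e^{-\beta}$. Hence
\begin{equation}
Z(\beta) \;=\; 1 + (2^N-1)e^{-\beta},
\end{equation}
giving $p(\x_\cvg)=1/Z(\beta)$ and $p(\x) = e^{-\beta}/Z(\beta)$ for every $\x\neq\x_\cvg$.

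Next I would write down the probability mass function induced by the sampler. Because the ``otherwise'' branch samples uniformly over all of $\{-1,1\}^N$ (and therefore has some chance of landing on $\x_\cvg$ as well), the sampler assigns mass $(1-p_\cvg)/2^N$ to each mask $\x\neq\x_\cvg$, and mass $p_\cvg + (1-p_\cvg)/2^N$ to $\x_\cvg$.

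The core step is to equate these two distributions on any non-converged state, which gives a single scalar equation for the unknown $p_\cvg$:
\begin{equation}
\frac{1-p_\cvg}{2^N} \;=\; \frac{e^{-\beta}}{1+(2^N-1)e^{-\beta}}.
\end{equation}
Solving for $p_\cvg$ and simplifying the numerator via $1+(2^N-1)e^{-\beta}-2^N e^{-\beta}=1-e^{-\beta}$ yields the stated formula. I would then note that the equation on the state $\x_\cvg$ is automatically satisfied, since both distributions are normalized and they already agree on the $2^N-1$ other states.

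There is essentially no obstacle here beyond bookkeeping: the only point to handle carefully is that the uniform branch covers \emph{all} $2^N$ masks (including $\x_\cvg$), so that the induced mass on $\x_\cvg$ is $p_\cvg + (1-p_\cvg)/2^N$ rather than $p_\cvg$; otherwise one would solve the wrong equation and obtain a different constant.
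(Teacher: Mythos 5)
Your proposal is correct and follows essentially the same route as the paper: compute the partition function $Z(\beta)=1+(2^N-1)e^{-\beta}$, account for the fact that the uniform branch can also produce $\x_\cvg$, and solve a single linear equation for $p_\cvg$. The only cosmetic difference is that you match probabilities on a non-converged state while the paper matches at $\x_\cvg$ itself; by normalization these are equivalent and yield the same formula.
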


\begin{proof}
	The partition function normalizes the distribution, so it is:
	\begin{align}
		Z(\beta) &= \sum_{\x\in\{-1,1\}^N}e^{-\beta \ind[\x\neq\x_\cvg]} \nonumber \\
				 &= (2^N-1)e^{-\beta} + 1.
	\end{align}
	The probability of sampling \(\x_\cvg\) is therefore:
	\begin{equation}
		P[\x=\x_\cvg] = \frac{1}{(2^N-1)e^{-\beta}+1}.
	\end{equation}
	Solving for \(p_\cvg\) gives:
	\begin{align}
		p_\cvg + \frac{1}{2^N}(1-p_\cvg) &= P[\x=\x_\cvg] \\
										 % &= \frac{1}{(2^N-1)e^{-\beta}+1} \nonumber \\
		p_\cvg &= \frac{1}{1-\frac{1}{2^N}}\left(\frac{1}{(2^N-1)e^{-\beta}+1}  - \frac{1}{2^N}\right) \nonumber \\
			   &= \frac{1-e^{-\beta}}{(2^N-1)e^{-\beta}+1}
	\end{align}
\end{proof}

\subsubsection{Linear Hamiltonians}

Another simple family of Hamiltonians that can meet our requirements are those that are linear in \(\x\):
\begin{equation}
	H(\x) = \a^T\x, \a \in \mathbb{R}^N.
	\label{eq:lin_family}
\end{equation}
Because they can take values other than zero and one, linear Hamiltonians offer more flexibility than a binary Hamiltonian and might provide smoother convergence.

A vector \(\x\) minimizes the Hamiltonian if \(x_i=1\) when \(a_i<0\) and \(x_i=-1\) when \(a_i>0\). So, to fulfil (\ref{eq:unstruct_req}), \(\a\) must be chosen such that \(a_i < 0\) when \(x_{\cvg,i} = 1\) and \(a_i > 0\) when \(x_{\cvg,i} = -1\). In terms of \(Q(p,\w)\), the following must hold:
\begin{equation}
	\sgn(a_i) = \sgn(Q(p,\w) - w_i^2).
\end{equation}

Since only the sign of \(\a\) is constrained and not its magnitude, we propose three possible schemes for selecting \(\a\) based on our previously stated design considerations:
\begin{enumerate}[label=(\roman*)]
	\item \hfill%
		\(a_i = \sgn(Q(p,\w) - w_i^2),\)
			  \hfill\refstepcounter{equation}\textup{(\theequation)}%
		\label{eq:a1}
	\item \hfill%
		\(a_i = Q(p,\w) - w_i^2,\)
			  \hfill\refstepcounter{equation}\textup{(\theequation)}%
		\label{eq:a2}
	\item \hfill%
		\(a_i = \sqrt{Q(p,\w)} - |w_i|.\)
			  \hfill\refstepcounter{equation}\textup{(\theequation)}%
		\label{eq:a3}
\end{enumerate}

In the formulation given by (\ref{eq:a1}) each vector element contributes either -1 or 1. Equations (\ref{eq:a2}) and (\ref{eq:a3}) instead cause \(a_i\) values to have higher or lower magnitudes depending on how far the corresponding weight magnitudes are from the \(p\)th quantile. We hypothesize that this will be more effective, since weights with very large or small magnitudes will cause the corresponding pruning mask elements to converge quickly to their final state, whereas more borderline weights will cause the corresponding pruning mask elements to behave more randomly, effectively holding off longer on the decision of whether to prune those weights until their importance becomes more clear. Equations (\ref{eq:a2}) and (\ref{eq:a3}) only differ in whether squared weight values or absolute weight values are used.

\subsubsection{Sampling}

Although Gibbs distributions are difficult to sample from generally, using a linear Hamiltonian causes the elements of \(\x\) to be independent, meaning they can easily be sampled individually. In the following proposition, we show that our proposed Hamiltonian for unstructured pruning factors in terms of elements of \(x\).
\begin{proposition}
The Gibbs distribution corresponding to a linear Hamiltonian has a product form.
\label{prop:unstruct-factor}
\end{proposition}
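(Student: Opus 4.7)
The plan is to substitute the linear Hamiltonian directly into the Gibbs distribution and use the fact that the exponential of a sum is a product of exponentials. Starting from $H(\x)=\a^T\x=\sum_{i=1}^N a_i x_i$, I would write
\begin{equation}
    e^{-\beta H(\x)} = e^{-\beta \sum_i a_i x_i} = \prod_{i=1}^N e^{-\beta a_i x_i},
\end{equation}
so each spin appears in a separate factor.

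Next I would factor the partition function. Because $\x$ ranges independently over $\{-1,1\}^N$, the sum defining $Z(\beta)$ distributes across the product:
\begin{equation}
    Z(\beta) = \sum_{\x\in\{-1,1\}^N}\prod_{i=1}^N e^{-\beta a_i x_i} = \prod_{i=1}^N \sum_{x_i\in\{-1,1\}} e^{-\beta a_i x_i} = \prod_{i=1}^N 2\cosh(\beta a_i).
\end{equation}
Combining this with the numerator yields
\begin{equation}
    p(\x) = \prod_{i=1}^N \frac{e^{-\beta a_i x_i}}{2\cosh(\beta a_i)},
\end{equation}
which is manifestly a product over $i$ of one-variable probability mass functions. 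This both establishes the product form and identifies each marginal explicitly, so the $x_i$ are independent Bernoulli-like variables on $\{-1,1\}$ with $P[x_i=1]=e^{-\beta a_i}/(2\cosh(\beta a_i))$.

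There is no real obstacle here; the argument is essentially a direct calculation, and the only thing to be careful about is that the interchange of sum and product in the partition function is valid precisely because the state space $\{-1,1\}^N$ is a Cartesian product and the summand factors across coordinates, which is exactly what the linearity of $H$ guarantees. I would end the proof by remarking that this factorization is what makes per-element sampling trivial, as used in the subsequent sampling discussion.
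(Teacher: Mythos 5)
Your proof is correct and follows essentially the same route as the paper's: both expand \(e^{-\beta\a^T\x}\) into a product over coordinates, factor the partition function into \(\prod_{i=1}^N 2\cosh(\beta a_i)\) by distributing the sum over the Cartesian product state space, and conclude \(p(\x)=\prod_{i=1}^N \frac{1}{2\cosh(\beta a_i)}e^{-\beta a_i x_i}\). Your added remark identifying the per-element marginal \(P[x_i=1]=e^{-\beta a_i}/(2\cosh(\beta a_i))\) is a correct, harmless extra.
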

\begin{proof}
	The partition function \(Z(\beta)\) normalizes the distribution. With a linear Hamiltonian (\ref{eq:lin_family}), it has the form:
\begin{align*}
    Z(\beta) &= \sum_{\x\in\{-1,1\}^N} e^{-\beta \sum_{i=1}^N a_ix_i} \\
    &= \sum_{x_1\in\{-1,1\}} \sum_{x_2\in\{-1,1\}} ... \sum_{x_N\in\{-1,1\}} \prod_{i=1}^N e^{-\beta a_ix_i} \\
    &= \prod_{i=1}^N \left(e^{-\beta a_i} + e^{\beta a_i} \right) \\
    &= \prod_{i=1}^N 2 \mbox{cosh} \Big(\beta a_i\Big).
\end{align*}
The Gibbs distribution therefore factors as:
\begin{align*}
p(\x) &= \frac{1}{\prod_{i=1}^N 2 \mbox{cosh} \Big(\beta a_i\Big)} e^{-\beta \sum_{i=1}^N a_ix_i} \\
    &= \prod_{i=1}^N \frac{1}{2 \mbox{cosh} \Big(\beta a_i\Big)} e^{-\beta a_ix_i}.
\end{align*}
\end{proof}

\subsection{Structured Pruning}
\label{sec:gp-struct}

For structured pruning, we specifically consider the problem of pruning structured groups, or neighbourhoods, of weights which must be either all pruned or all kept. For instance, to do filter-wise pruning on a convolutional network, we would define the neighbourhoods to correspond to the filters of the layer. Following our approach of pruning based on weight magnitude, we propose pruning the neighbourhoods with lowest total squared weight magnitude, rather than considering individual weights. This mean that \(\x_\cvg\) is instead defined as:
\begin{multline}
	\x_\cvg = \argmin_\x \left[\x^T\w^2\right] \text{ s.t. } \frac{\sum_i \ind[x_i = -1]}{N} = p \text{ and } \\ (i,j \in \N_k) \implies x_{\cvg,i} = x_{\cvg,j}.
\end{multline}
An equivalent definition in terms of \(Q(p,\bar{\w})\) is:
\begin{equation}
	i \in \N_k \implies x_{\cvg,i} = -1 + 2\ind[\bar{w}_k^2 > Q(p,\bar{\w})].
\end{equation}

A binary Hamiltonian can be used exactly as described above, just with this new \(\x_\cvg\). A linear Hamiltonian can also be used with \(a_i\) values that are 1 or -1 depending on the converged state of each \(x_i\):
\begin{equation}
	H(\x) = \sum_{k=1}^M\sgn(Q(p,\bar{\w})-\bar{w}_k^2)\sum_{i\in\N_k}x_i.
	\label{eq:struct_ham_lin}
\end{equation}
This is analogous to the unstructured Hamiltonian (\ref{eq:a1}). The same approaches as previously discussed can be used for sampling these Hamiltonians for structured pruning.

\subsubsection{Quadratic Hamiltonian}

In structured pruning, the converged state is a function of average or total weight magnitudes over neighbourhoods. This means that setting linear coefficients based on the final converged state leads to coefficients that do not consider individual weight magnitudes, as the Hamiltonians (\ref{eq:a2}) and (\ref{eq:a3}) for unstructured pruning do. To both express structure requirements and consider the magnitudes of individual weights, we move to a quadratic family:
\begin{equation}
	H(\x) = \x^T\A\x + \b^T\x, \, \A \in \mathbb{R}^{N\times N}, \, \b \in \mathbb{R}^N.
\end{equation}
This Hamiltonian is equivalent to the Ising model in statistical physics~\cite{cipra1987introduction}. Without loss of generality, we assume \(\A\) is symmetrical. Since \(x_i \in \{-1,1\}\), the diagonal elements of \(\A\) only contribute constant terms, so we ignore them. The off-diagonal elements of \(\A\) add quadratic terms of the form \(a_{ij}x_ix_j\), which contribute an energy of \(a_{ij}\) if \(x_i\) and \(x_j\) take the same value and \(-a_{ij}\) if they are different. Therefore, the off-diagonal elements encourage two elements of \(\x\) to either take the same value in sampling or to take different values. This allows the quadratic terms to express desired structures. Since they do not distinguish between the \(+1\) and \(-1\) states, only depending on whether two elements have the same state, the linear terms must be used for promoting or discouraging the pruning of particular elements.

Our desired structure requires mask elements in the same neighbourhood to take the same value. There are no cases where we require particular mask elements to take different values. Therefore, we set \(\A\) such that \(a_{ij}<0\) if \(i\) and \(j\) are in the same neighbourhood, and \(a_{ij}=0\) otherwise.
For simplicity, we use a constant \(a_{ij}=-c/2\) for non-zero elements of \(A\). This leads to the Hamiltonian:
\begin{equation}
	H(\x) = -c\sum_{k=1}^M\,\sum_{\substack{i,j \in \N_k\\i \neq j}}x_ix_j  + \b^T\x.
	\label{eq:struct_ham_b}
\end{equation}

To prove that the converged state obeys the requirements of structured pruning, we introduce the following property:

\begin{proposition}
	Given (\ref{eq:struct_ham_b}), for any \(\b \in \mathbb{R}^N\), we can select a value \(c\) such that, in the state minimizing \(H(\x)\), all elements in the same neighbourhood take the same value.
	\label{prop:big-enough-c}
\end{proposition}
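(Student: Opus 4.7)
The plan is to exploit the fact that the quadratic term in (\ref{eq:struct_ham_b}) rewards agreement within each neighbourhood, while the linear term $\b^T\x$ can only create a bounded incentive for disagreement. So we should pick $c$ large enough that the gain from uniformizing any non-uniform neighbourhood outweighs whatever the linear term loses in the process. Concretely, for any non-uniform configuration $\x$, I will exhibit a nearby uniform configuration $\x'$ with strictly smaller Hamiltonian, which forces the global minimizer to be uniform within each neighbourhood.

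The first step is to analyze the intra-neighbourhood quadratic sum. Fix $k$ and suppose $s$ of the values $\{x_i : i \in \N_k\}$ equal $+1$ and $n_k - s$ equal $-1$, where $n_k = |\N_k|$. Then
\begin{equation*}
\sum_{\substack{i,j \in \N_k \\ i \neq j}} x_i x_j \;=\; \Bigl(\sum_{i \in \N_k} x_i\Bigr)^2 - n_k \;=\; (2s - n_k)^2 - n_k,
\end{equation*}
which is maximized at $s = 0$ or $s = n_k$ (the uniform states) with value $n_k(n_k - 1)$, and when $0 < s < n_k$ it is at most $(n_k - 2)^2 - n_k$. Hence making a non-uniform neighbourhood uniform decreases the quadratic contribution of that neighbourhood by at least $c\bigl[n_k(n_k-1) - ((n_k-2)^2 - n_k)\bigr] = 4c(n_k - 1)$.

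Next, given a non-uniform $\x$, pick a neighbourhood $k$ with $0 < s < n_k$ and let $\x'$ be obtained by forcing every $x_i$ with $i \in \N_k$ to a common value $\sigma \in \{-1,+1\}$ (leaving all other coordinates fixed); choose $\sigma$ to be whichever sign yields the smaller increase in $\b^T\x$. Since we changed at most $n_k$ coordinates of $\x$, each by $\pm 2$, we have the crude bound
\begin{equation*}
\bigl|\b^T\x' - \b^T\x\bigr| \;\leq\; 2\sum_{i \in \N_k}|b_i| \;\leq\; 2\|\b\|_1.
\end{equation*}
Combining with the previous step, $H(\x') - H(\x) \leq -4c(n_k - 1) + 2\|\b\|_1$. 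Letting $n_{\min} = \min\{n_k : n_k \geq 2\}$ (neighbourhoods with $n_k = 1$ are trivially uniform and can be ignored), any choice
\begin{equation*}
c \;>\; \frac{\|\b\|_1}{2(n_{\min} - 1)}
\end{equation*}
makes $H(\x') < H(\x)$. Therefore no non-uniform $\x$ can minimize $H$, so the minimizer must be uniform within every neighbourhood, proving the proposition.

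The only mildly delicate step is the identity $\sum_{i\neq j \in \N_k} x_i x_j = (2s-n_k)^2 - n_k$ and the resulting gap $4c(n_k-1)$; everything else is bookkeeping with the triangle inequality. I do not anticipate a genuine obstacle, since the quadratic gap grows linearly with $c$ while the linear perturbation is $c$-independent, so sufficiency of large $c$ is essentially automatic once the gap is computed correctly.
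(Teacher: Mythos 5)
Your proof is correct, but it takes a genuinely different route from the paper's. The paper argues globally: it picks \(c\) larger than the full range \((\max_\x \b^T\x)-(\min_\x \b^T\x)=2\|\b\|_1\) of the linear term over the whole hypercube, then compares the \emph{worst} (largest) energy attainable by any neighbourhood-uniform state, \(-cN_\text{pairs}+\max_\x\b^T\x\), with the \emph{best} (smallest) energy of any state violating uniformity, which is at least \(-c(N_\text{pairs}-2)+\min_\x\b^T\x\) because at least one pair term flips from \(-c\) to \(+c\); the \(2c\) gap dominates the linear swing, so no violating state can be a minimizer. You instead run a local exchange argument: using the identity \(\sum_{i\neq j\in\N_k}x_ix_j=(\sum_{i\in\N_k}x_i)^2-|\N_k|\), you uniformize a single non-uniform neighbourhood and show the quadratic gain of at least \(4c(|\N_k|-1)\) beats the linear perturbation, bounded by \(2\sum_{i\in\N_k}|b_i|\le 2\|\b\|_1\). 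Both are valid existence proofs; yours buys a quantitatively sharper threshold, \(c>\|\b\|_1/\bigl(2(n_{\min}-1)\bigr)\) (indeed you could use the local quantity \(\sum_{i\in\N_k}|b_i|\) per neighbourhood), which improves with neighbourhood size, whereas the paper's \(c>2\|\b\|_1\) is cruder but needs only the one-pair-flip observation and no counting identity. One immaterial caveat: your exact constants assume the double sum in (\ref{eq:struct_ham_b}) runs over ordered pairs; the paper's derivation from \(a_{ij}=-c/2\) and its definition of \(N_\text{pairs}\) suggest unordered pairs, under which your gain is \(2c(|\N_k|-1)\) and your threshold doubles --- since the proposition only asserts that some sufficiently large \(c\) works, this factor of two does not affect correctness.
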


\begin{proof}
	Let \(N_\text{pairs}\) represent the number of pairs of weights in the same neighbourhood:
	\begin{equation}
		N_\text{pairs} = |\{\{i,j\} \mid \exists k, i,j \in \N_k\}|.
	\end{equation}
	Select a positive value \(c > (\max_\x \b^T\x) - (\min_\x \b^T\x)\). A state where all elements in each neighbourhood take the same value therefore has maximum energy:
	\begin{equation}
		H(\x) \leq -cN_\text{pairs} + \max_\x \b^T\x.
		\label{eq:min_c_proof}
	\end{equation}
	If there exist two elements in a neighbourhood with different values, then at least one \(-cx_ix_j\) term must switch from adding \(-c\) to the energy to adding \(+c\). Therefore, a state where some elements in a neighbourhood take different values has minimum energy:
	\begin{align}
		H(\x) &\geq -c(N_\text{pairs}-2) + \min_\x \b^T\x \nonumber \\
			  &\geq -cN_\text{pairs} +2((\max_\x \b^T\x) - (\min_\x \b^T\x)) + \min_\x \b^T\x \nonumber \\
			  &\geq -cN_\text{pairs} + (\max_\x \b^T\x) + c.
	\end{align}
	This is strictly greater than (\ref{eq:min_c_proof}) and therefore cannot minimize \(H(\x)\).
\end{proof}

This proposition shows that when selecting \(\b\), we can tune \(c\) in a way that lets us limit our consideration to states where neighbourhoods take the same values. Within a neighbourhood \(\N_k\), the linear contribution to the Hamiltonian is therefore \(\pm\bar{b}_k\) where \(\bar{b}_k = \sum_{i\in\N_k} b_i\). If this value is positive, then the neighbourhood will be pruned in the minimum state and if it is negative, the neighbourhood will not be pruned. Our definition of \(\x_\cvg\) therefore requires:
\begin{align}
\begin{split}
	\bar{b}_k < 0 &\text{ if } \bar{w}_k^2 > Q(p,\bar{\w}), \\
	\bar{b}_k > 0 &\text{ if } \bar{w}_k^2 < Q(p,\bar{\w}).
	\label{eq:bar_b_reqs}
\end{split}
\end{align}

To satisfy this requirement and incorporate individual weight magnitudes, we set \(\b\) in a way analogous to (\ref{eq:a2}):
\begin{equation}
	b_i =  Q(p,\bar{\w}) - w_i^2.
\end{equation}
The corresponding Hamiltonian is then:
\begin{equation}
	H(\x) = -c\sum_{k=1}^M\,\sum_{\substack{i,j \in \N_k\\i \neq j}}x_ix_j  + \sum_{i=1}^N\left(Q(p,\bar{\w}) - w_i^2\right)x_i.
	\label{eq:struct_ham}
\end{equation}
This formulation may perform better than the linear structured pruning Hamiltonian (\ref{eq:struct_ham_lin}) since it takes the magnitudes of individual weights into account before converging to a structured mask. We test both and compare results in the following section.

The hyperparameter \(c\) must be tuned for the particular network and task. Setting it too low will prevent the mask from converging to a state where neighbourhoods take the same value, meaning that structured pruning will not be achieved. On the other hand, setting \(c\) very high will result in the neighbourhood constraint always being followed, even very early in training, which will make the convergence to the pruning mask less gradual. We find that this reduces the final accuracy in practice.

\subsubsection{Sampling}
\label{sec:struct-sampling}

Sampling from the Gibbs distribution with a quadratic Hamiltonian is more challenging than for binary or linear Hamiltonians. First of all, we note that the distribution factors in terms of neighbourhoods and so individual neighbourhoods can be sampled independently:

\begin{proposition}
The Gibbs distribution corresponding to the Hamiltonian (\ref{eq:struct_ham}) has a block product form.
\end{proposition}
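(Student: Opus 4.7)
The plan is to mimic the proof of Proposition \ref{prop:unstruct-factor}, but with the factorization occurring at the level of neighbourhoods rather than individual components. The key observation is that the Hamiltonian (\ref{eq:struct_ham}) couples variables only within a single neighbourhood: the quadratic piece \(-c\sum_{k}\sum_{i\neq j \in \N_k}x_ix_j\) explicitly contains only intra-neighbourhood pairs, and since \(\{\N_k\}\) partitions the index set \(\{1,\ldots,N\}\), the linear piece \(\sum_i(Q(p,\bar{\w})-w_i^2)x_i\) also regroups cleanly as \(\sum_k \sum_{i\in\N_k}(Q(p,\bar{\w})-w_i^2)x_i\).

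First I would write \(H(\x) = \sum_{k=1}^M H_k(\x_{\N_k})\), where \(\x_{\N_k}\) denotes the subvector of \(\x\) indexed by \(\N_k\) and
\[
H_k(\x_{\N_k}) = -c\sum_{\substack{i,j \in \N_k\\i \neq j}}x_ix_j + \sum_{i\in\N_k}(Q(p,\bar{\w}) - w_i^2)x_i.
\]
Next, using \(e^{-\beta H(\x)} = \prod_{k=1}^M e^{-\beta H_k(\x_{\N_k})}\) and the fact that the neighbourhoods are disjoint, I would rewrite the sum over \(\x \in \{-1,1\}^N\) as an iterated sum over \(\x_{\N_1}, \ldots, \x_{\N_M}\). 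This allows the partition function to split:
\[
Z(\beta) = \sum_{\x\in\{-1,1\}^N}\prod_{k=1}^M e^{-\beta H_k(\x_{\N_k})} = \prod_{k=1}^M \sum_{\x_{\N_k}\in\{-1,1\}^{|\N_k|}} e^{-\beta H_k(\x_{\N_k})} = \prod_{k=1}^M Z_k(\beta),
\]
where \(Z_k(\beta)\) denotes the local partition function for block \(k\).

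Dividing by \(Z(\beta)\) then gives
\[
p(\x) = \prod_{k=1}^M \frac{1}{Z_k(\beta)}e^{-\beta H_k(\x_{\N_k})} = \prod_{k=1}^M p_k(\x_{\N_k}),
\]
which is the required block product form. There is no genuine obstacle here; the only subtlety is being careful that the cross-neighbourhood couplings vanish in \(H\), which is exactly guaranteed by the indicator constraint \(i,j\in\N_k\) in the quadratic term and by the partition property of the neighbourhoods applied to the linear term. The rest is a direct adaptation of the sum-product interchange used in Proposition \ref{prop:unstruct-factor}.
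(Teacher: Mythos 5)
Your proposal is correct and follows essentially the same route as the paper's own proof: decompose \(H(\x)=\sum_{k=1}^M H_k(\x_{\N_k})\) using the fact that the quadratic term only couples indices within a neighbourhood and the neighbourhoods partition the index set, factor \(Z(\beta)=\prod_k Z_k(\beta)\) by the same sum--product interchange as in Proposition~\ref{prop:unstruct-factor}, and conclude the block product form of \(p(\x)\). No gaps; your explicit regrouping of the linear term by neighbourhood is the only detail the paper leaves implicit.
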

\begin{proof}
Since the first term of the Hamiltonian (\ref{eq:struct_ham}) is zero for elements from different neighbourhoods, the Hamiltonian can be expressed as:
\begin{align}
    H(\x) &= \sum_{k=1}^M H_k(\x_k),
\end{align}
where \(H_k(\x_k)\) is the Hamiltonian (\ref{eq:struct_ham}) only computed over the elements of the neighbourhood \(\N_k\). Following the same approach as for Proposition~\ref{prop:unstruct-factor}, the partition function can then be shown to factor as:
\begin{align}
    Z(\beta) &= \prod_{k=1}^M\,\sum_{\x_k \in \{-1,1\}^{|\N_k|}} e^{-\beta H_k(\x_k)} \nonumber \\
    &= \prod_{k=1}^M Z_k(\beta),
\end{align}
and the overall distribution factors as:
\begin{align}
p(\x) &= \prod_{k=1}^M\frac{1}{Z_k(\beta)} e^{-\beta H_k(\x_k)}.
\end{align}
\end{proof}

If neighbourhoods are small enough, the partition function for each can be computed and the distribution can be sampled from directly. But for large neighbourhoods, this is computationally infeasible, and so we use a MCMC method to generate samples. A sampling method that parallelizes well and is simple to implement on a GPU is preferable for neural network training to highly iterative methods like standard Gibbs sampling or the Wolff algorithm~\cite{wolff1989collective}. We therefore propose using Chromatic Gibbs sampling~\cite{gonzalez2011parallel}.

Chromatic Gibbs sampling involves making a colouring over the elements in \(\x\) such that any interacting elements \(x_i\) and \(x_j\) in the Hamiltonian have different colours. For our Hamiltonians, \(x_i\) and \(x_j\) interact if there is a term \(a_{ij} x_ix_j\) with $a_{ij} \neq 0$. An iteration of the Markov chain then consists of sampling all elements of one colour simultaneously given the current values of all other elements.

To create a colouring, we arbitrarily divide each neighbourhood into two sets of elements and modify the Hamiltonian to remove all quadratic terms containing elements in the same set. In graphical terms, this transforms the Markov random field for each neighbourhood from a complete graph to a complete bipartite graph that can be coloured with two colours, as shown in Figure~\ref{fig:chromatic}. This modification still preserves a high degree of connectivity within each neighbourhood and so is still effective at encouraging elements in neighbourhoods to take the same value. For instance, to perform filter-wise pruning of a convolutional network, we can divide all connections into two groups based on their input channel. % Pseudocode for this algorithm is shown in Algorithm~\ref{fig:chromatic-algo}.

\begin{figure}[t!]
	\centering
	\begin{adjustbox}{width=0.35\textwidth}
        	\begin{tikzpicture}[scale=1.5]
		\node[circle,minimum size=15,draw,fill=yellow](1) at (3.5,0) {};
		\node[circle,minimum size=15,draw,fill=yellow](2) at (3.5,0.6) {};
		\node[circle,minimum size=15,draw,fill=yellow](3) at (3.5,1.2) {};
		\node[circle,minimum size=15,draw,fill=yellow](4) at (3.5,1.8) {};
		\node[circle,minimum size=15,draw,fill=blue](6) at (5,0) {};
		\node[circle,minimum size=15,draw,fill=blue](7) at (5,0.6) {};
		\node[circle,minimum size=15,draw,fill=blue](8) at (5,1.2) {};
		\node[circle,minimum size=15,draw,fill=blue](9) at (5,1.8) {};

		\node[circle,minimum size=15,draw](11) at (0.63,0) {};
		\node[circle,minimum size=15,draw](12) at (0,0.6) {};
		\node[circle,minimum size=15,draw](13) at (0,1.2) {};
		\node[circle,minimum size=15,draw](14) at (0.63,1.8) {};
		\node[circle,minimum size=15,draw](16) at (1.38,0) {};
		\node[circle,minimum size=15,draw](17) at (2,0.6) {};
		\node[circle,minimum size=15,draw](18) at (2,1.2) {};
		\node[circle,minimum size=15,draw](19) at (1.38,1.8) {};
		\draw (1) -- (6); \draw (1) -- (7); \draw (1) -- (8); \draw (1) -- (9);
		\draw (2) -- (6); \draw (2) -- (7); \draw (2) -- (8); \draw (2) -- (9);
		\draw (3) -- (6); \draw (3) -- (7); \draw (3) -- (8); \draw (3) -- (9);
		\draw (4) -- (6); \draw (4) -- (7); \draw (4) -- (8); \draw (4) -- (9);

		\draw (11) -- (16); \draw (11) -- (17); \draw (11) -- (18); \draw (11) -- (19);
		\draw (12) -- (16); \draw (12) -- (17); \draw (12) -- (18); \draw (12) -- (19);
		\draw (13) -- (16); \draw (13) -- (17); \draw (13) -- (18); \draw (13) -- (19);
		\draw (14) -- (16); \draw (14) -- (17); \draw (14) -- (18); \draw (14) -- (19);
		\draw (11) -- (12); \draw (11) -- (13); \draw (11) -- (14);
		\draw (12) -- (11); \draw (12) -- (13); \draw (12) -- (14);
		\draw (13) -- (11); \draw (13) -- (12); \draw (13) -- (14);
		\draw (14) -- (11); \draw (14) -- (12); \draw (14) -- (13);
		\draw (16) -- (17); \draw (16) -- (18); \draw (16) -- (19);
		\draw (17) -- (16); \draw (17) -- (18); \draw (17) -- (19);
		\draw (18) -- (16); \draw (18) -- (17); \draw (18) -- (19);
		\draw (19) -- (16); \draw (19) -- (17); \draw (19) -- (18);

		\draw[-Stealth,thick] (2.25,0.9) -- (3.25, 0.9);
	\end{tikzpicture}
	\end{adjustbox}
	\captionsetup{font=small}
	\caption[Removing interactions to allow a 2-colouring of a random field.]{Removing interactions between elements so that neighbourhoods form a complete bipartite graph allows a 2-colouring, which in turn allows chromatic Gibbs sampling to be used.}
	\label{fig:chromatic}
\end{figure}
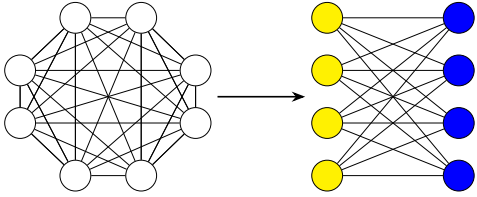

When using MCMC sampling, it is important to initialize the chain at a plausible configuration, since later samples may still be somewhat correlated with the initial state~\cite{brooks2011handbook}. To initialize the chain, we sample from an approximation of the quadratic Hamiltonian that ignores states that violate the structure requirement. This makes the quadratic term a constant value, leading to the linear Hamiltonian:
\begin{equation}
	H(\bar{\x}) = C + \sum_{k=1}^M |\N_k|Q(p,\bar{\w}) - \bar{\w}_k^2)\bar{x}_k,
	\label{eq:struct_ham_approx}
\end{equation}
where \(\bar{x}_k\) is the value taken by all \(x_i\) in neighbourhood \(k\) and \(C\) is a constant representing the contribution of the quadratic terms. Since it is linear, this Hamiltonian is easy to sample from, and is likely to represent local modes of the distribution since the quadratic term penalizes states that do not obey the structure requirement.

\subsection{Schedule Stretching}
\label{sec:gp-schedule}

Many existing pruning methods make use of much longer training schedules than are typically used for training networks because they do additional fine-tuning (e.g., \cite{guo2016dynamic,han2016deep,li2017pruning}). Our pruning method aims to prune and train at the same time so we avoid explicit fine-tuning, but we still investigate whether a longer training schedule can provide benefits. To test this, we propose \textbf{schedule stretching}. During training, we follow schedules for setting the optimizer learning rate and the Gibbs distribution parameter \(\beta\) as a function of the current training epoch. Let \(n\) represent the current epoch number, starting from zero, and let \(\lambda(n)\) and \(\beta(n)\) represent the learning rate and \(\beta\) schedules, respectively. When using schedule stretching, we set a hyperparameter \(s\) to a positive integer value, and instead use \(\lambda(\lfloor \frac{n}{s} \rfloor)\) and \(\beta(\lfloor \frac{n}{s}\rfloor)\) to set the learning rate and \(\beta\) at each epoch. The total number of training epochs is increased by a factor of \(s\).

\section{Evaluation}
\label{chap:evaluation}

We compare Gibbs pruning to several other structured and unstructured pruning methods. The other methods were chosen to represent current practices and research, either being
common, well-established pruning method or ones that have recently shown exceptional results. The lack of standard experimental setups in network pruning make it difficult to judge which methods are state-of-the-art~\cite{blalock2020state}, especially in the case of structured pruning, but we do compare to best known results where possible.

We evaluate performance on ResNet neural networks~\cite{he2016deep}, which is a common approach in the network pruning literature~\cite{blalock2020state}. Nearly all of the parameters of these networks are in convolutional layers, which already are much more sparse than dense layers of similar dimensions and additionally employ weight sharing, making them more difficult to prune than dense layers. Pruning methods that are effective on networks like AlexNet~\cite{krizhevsky2012imagenet} with large dense layers often do not show the same performance on networks like ResNet, making them a more challenging test.

In particular, we train ResNet-20 and ResNet-56 with linear projection~\cite{he2016deep}. ResNet-56 was chosen because it is the most common ResNet variant used in evaluating pruning methods~\cite{blalock2020state}, making it easier to compare our method to others. ResNet-20 on the other hand is the smallest standard ResNet variant. This makes it a useful test of pruning methods because it is less likely to be overparameterized for particular datasets, and it can be used with high pruning rates to investigate the efficacy of extremely small neural networks.

We run our experiments with the CIFAR-10 dataset~\cite{krizhevsky2012cifar}. This is a standard image classification dataset that is widely used in evaluating pruning methods~\cite{blalock2020state}.

The networks are trained for 200 epochs using the Adam optimizer~\cite{kingma2014adam}, with a learning rate initially set to \(10^{-3}\) and reduced by a factor of 10 at epoch 80 and every 40 epochs thereafter. We use data augmentation during training as in~\cite{he2016deep}, randomly shifting images horizontally and vertically by up to 10\% and flipping images horizontally with 50\% probability. This achieves baseline top-1 accuracies of 90.7\% for ResNet-20 and 92.2\% for ResNet-56.

For all methods, we prune all convolutional layers except for the first one, following the recommendation in \cite{gale2019state}. This is because the input to the first convolutional layer only has three channels, leading to the first convolutional layer having very few parameters and having its performance easily harmed by high pruning rates. A precise summary of pruned and unpruned layers is shown in Table \ref{t:params}.

\begin{table}[t!]
	\centering
	\caption{The fractions of parameters in pruned and unpruned layers in the evaluated neural networks.}
	\begin{tabular}{cc|c|c}
		\multicolumn{2}{c}{} & ResNet-20 & ResNet-56 \\\hline
		\multirowcell{3}{\textbf{Unpruned} \\ \textbf{layers}} & First convolution & 0.16\% & 0.05\% \\
										 & Batch normalization & 1.01\% & 0.95\% \\
										 & Dense & 0.24\% & 0.08\%  \\\hline
		\makecell{\textbf{Pruned} \\ \textbf{layers}} & Other convolutions & 98.60\% & 98.93\% \\
	\end{tabular}
	\label{t:params}
\end{table}

For pruning methods that use additional epochs for fine-tuning, we use a training rate of \(10^{-5}\) after the initial training phase. We also test changing the lengths of fine-tuning schedules, to evaluate tradeoffs between the number of additional training epochs and the final accuracy.

\subsection{Unstructured Pruning Results}
\label{sec:ev-unstruct}

The Hamiltonians previously discussed for unstructured pruning are the binary Hamiltonian (\ref{eq:bin}) and the linear Hamiltonians given by (\ref{eq:a1}), (\ref{eq:a2}), and (\ref{eq:a3}). Initial and final \(\beta\) values, as well as the number of epochs to anneal over and the use of a logarithmic or linear schedule were manually tuned to maximize final accuracy. The best results achieved for each are shown in Table \ref{t:unstruct-hams}. Since the best result was observed with (\ref{eq:a2}), we use this formulation in the rest of our experiments.

\begin{table}[t!]
	\centering
	\caption[Comparison of Hamiltonians for unstructured pruning.]{Comparison of Hamiltonians for unstructured pruning. 90\% of weights in each pruned layer are pruned.}
	\label{t:results_unstruct}
	\begin{tabular}{cc|cc}
		Variant & \makecell{ResNet-20\\Accuracy} & \makecell{ResNet-56\\Accuracy} \\\hline
		Binary (\ref{eq:bin}) & 86.6 & 89.1 \\
		Linear (\ref{eq:a1}) & 85.7 & 87.7 \\
		Linear (\ref{eq:a2}) & \textbf{87.5} & \textbf{89.8} \\
		Linear (\ref{eq:a3}) & 86.9 & 89.2
	\end{tabular}
	\label{t:unstruct-hams}
\end{table}

Notably, (\ref{eq:a2}) and (\ref{eq:a3}), which produced the best results, both create a Hamiltonian with coefficients that don't just express whether a weight should be pruned or not, but also vary depending on how far a weight is from \(Q(p,\w)\). This allows weights with very high or very low magnitudes to converge to a pruned or unpruned state earlier. This being desirable would also explain why (\ref{eq:a2}) is more effective than (\ref{eq:a3}), since it considers the squares of magnitude, making weights with high magnitudes vary from \(Q(p,\w)\) even more and therefore converge earlier.

After manually tuning \(\beta\) annealing for the Hamiltonian (\ref{eq:a2}), we arrived at a logarithmic schedule from 0.7 to 10000 done over the first 128 epochs of a 200 epoch schedule, updating \(\beta\) at the end of each epoch. This schedule is used for all subsequent unstructured pruning experiments.
In practice, we find that when using schedule stretching with very long schedules, the pruning mask does not always converge at \(\beta=10^4\). This indicates that the empirical distribution of weights can be significantly different in some highly stretched configurations. In these cases we increase the final beta to \(10^6\), which achieves convergence in all such cases.

\subsubsection{Comparison to Existing Methods}

We compare Gibbs pruning to four established unstructured pruning methods. For methods that use fine-tuning, we evaluate spending different amounts of time on fine-tuning. We also test different overall training times for Gibbs pruning by using schedule stretching as described in Section \ref{sec:gp-schedule}.

The first established method is using \(l_1\) regularization. We use a penalty of 0.001, which was manually tuned to maximize final accuracy. After training, we mask the fraction \(p\) of weights in each layer with the lowest magnitude. \(l_1\) regularization is known to cause weights to become sparse~\cite{goodfellow2016deep}, and has been used as a simple point of comparison for unstructured pruning methods~\cite{gomez2019learning}.

The other pruning methods we compare to are more advanced approaches from the literature. The first of these is the method proposed in \cite{han2015learning} by Han et al. Once training is complete, a certain percentage of the weights with lowest magnitude are pruned and the network is fine-tuned for some time. This procedure is then repeated several times. We test different schedule lengths in order to compare this methods to other methods that can use different training schedule lengths. Before each fine-tuning period, we increase the pruning rate by 10\% up to 90\%, fine-tuning for \(n\) epochs each time, leading to a total of \(200+9n\) epochs.

We also compare to iterative magnitude pruning (IMP), proposed by Frankle et al.~\cite{frankle2019lottery} with rewinding~\cite{frankle2019stabilizing}. This method trains the network several times, pruning gradually more each time and then rewinding the network weights to the values they had after the first 500 training steps. To test different training times, we vary the number of times that the network is trained with intermediate pruning rates between the initial rate of 0\% and the final rate of 90\%.

Finally, we test targeted dropout, a method recently proposed by Gomez et al.\cite{gomez2019learning}. In this method, the fraction \(\gamma\) of weights with lowest magnitude are each omitted from the network with probability \(\alpha\) at each training step. After training, the fraction of weights \(p\) with lowest magnitude are permanently pruned.
We use the authors' most successful hyperparameter settings: \(\alpha=0.75,\gamma=0.9\). We also evaluate ramping targeted dropout, a variant where \(\alpha\) is gradually changed from 0 to \(p\) during training~\cite{gomez2019learning}.

\begin{figure}[t!]
	\centering
	\begin{subfigure}{0.48\textwidth}
	\includegraphics[width=\textwidth]{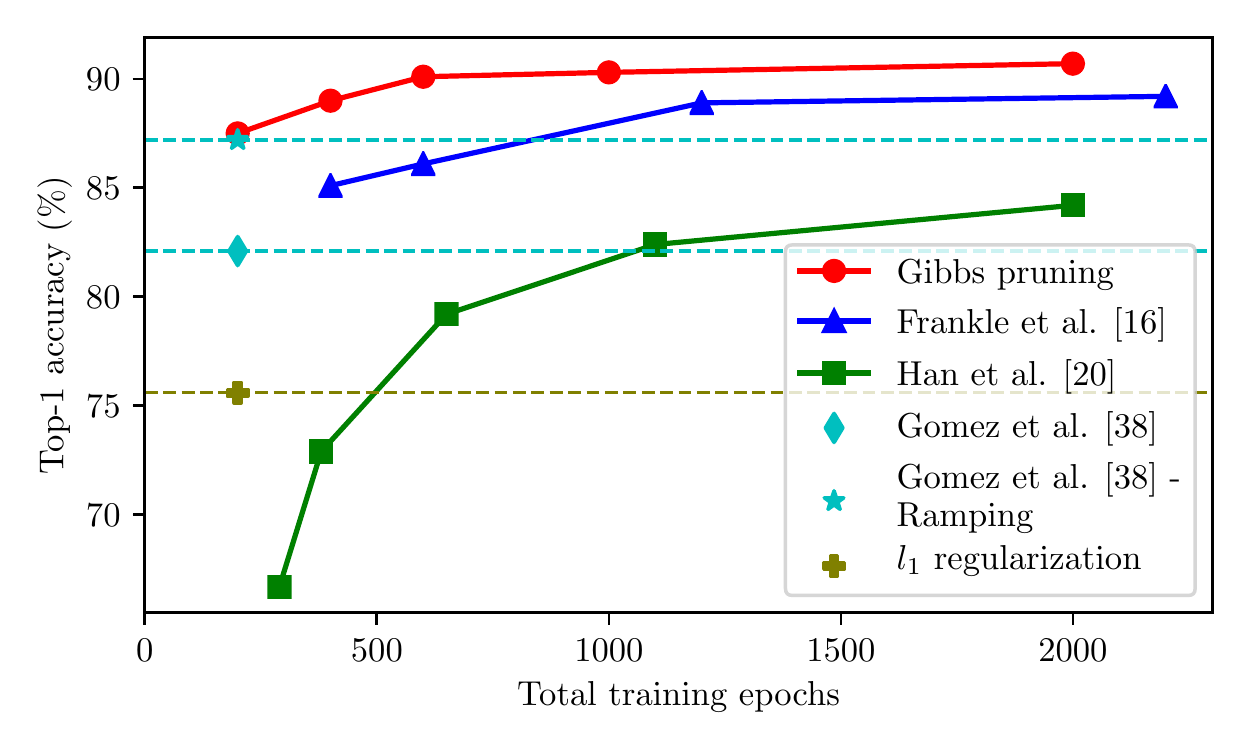}
	\label{fig:resnet20scatter}
	\vspace{-7mm}
	\caption{ResNet-20}
	\end{subfigure}
	\begin{subfigure}{0.48\textwidth}
	\includegraphics[width=\textwidth]{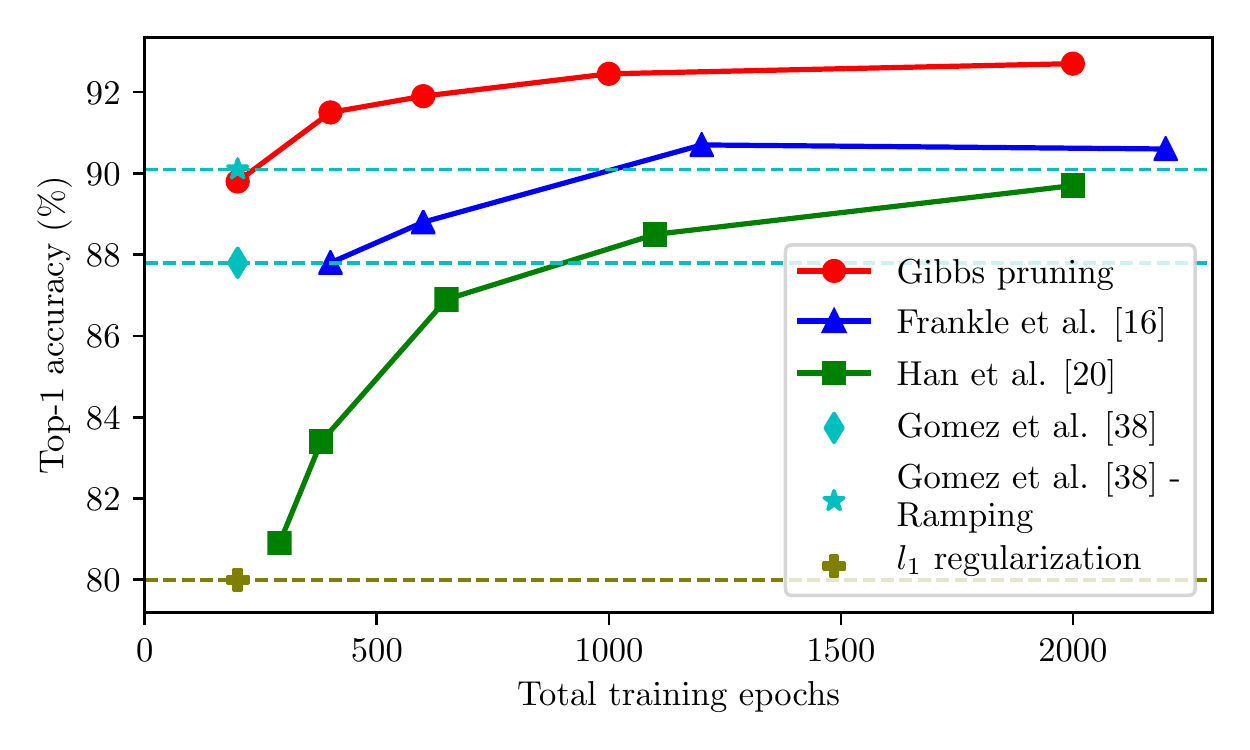}
	\label{fig:resnet56scatter}
	\vspace{-7mm}
	\caption{ResNet-56}
	\end{subfigure}
	\caption[Comparison of Gibbs pruning to other unstructured pruning methods.]{Comparison of Gibbs pruning to other unstructured pruning methods. Dotted lines are shown for comparison on methods that do not require additional training epochs. }
	\label{fig:resnetscatter}
\end{figure}

\begin{figure}[t!]
	\centering
	\includegraphics[width=0.48\textwidth]{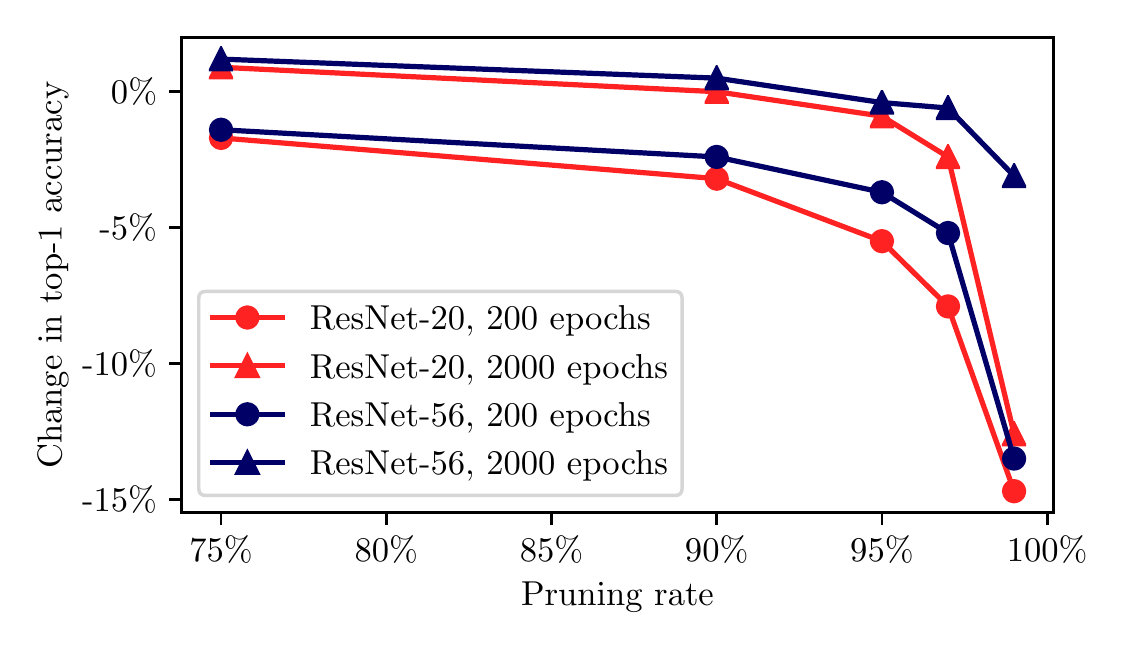}
	\caption{Loss in accuracy compared to the baseline versus pruning rate using unstructured Gibbs pruning.}
	\label{fig:p-acc}
\end{figure}

We evaluate pruning 90\% of weights in each pruned layer, which is a high degree of compression for the given networks. Because this pruning rate is challenging, it effectively shows performance differences between different methods. Results are shown in Figure~\ref{fig:resnetscatter}. These show that Gibbs pruning is effective without additional training epochs, giving similar results to the best results from the other evaluated methods. When the training time is extended using schedule stretching, Gibbs pruning decisively outperforms the other evaluated methods.

\subsubsection{High Pruning Rates}

We evaluate Gibbs pruning using different pruning rates in Figure~\ref{fig:p-acc}. When schedule stretching is used, the resulting accuracy can come close to or exceed the baseline accuracy even with very high pruning rates. For instance, removing 95\% of weights in pruned layers for an overall pruning rate of 94\% decreases accuracy by only 0.90\% for ResNet-20 and 0.45\% for ResNet-50. This corresponds to models with roughly 17\,000 final parameters when pruning ResNet-20 and 48\,000 final parameters when pruning ResNet-56. To the best of our knowledge, these are the best results reported for such a high compression rate on CIFAR-10~\cite{blalock2020state}.

\subsubsection{Random Masks and Re-initialization}
\label{sec:ev-re-initialization}

\begin{table}[t!]
	\centering
	\caption[Comparison of random masks and random weight re-initialization to Gibbs pruning.]{Comparison of random masks and random weight re-initialization to Gibbs pruning. Values are top-1 accuracy in percent.}
	\begin{subtable}{0.48\textwidth}
		\centering
		\caption{ResNet-20}
		\begin{tabular}{c|cccc}
			Method & \multicolumn{4}{c}{Pruning rate} \\\hline
				   & 50\% & 75\% & 90\% & 95\% \\\hline
				   Random mask & 89.3 & 87.2 & 83.2 & 79.8 \\
				   Random re-init. & 88.6 & 86.3 & 83.4 & 79.4 \\
				   Gibbs pruning & \textbf{89.9} & \textbf{89.0} & \textbf{87.5} & \textbf{85.2} \\
			   \end{tabular}
			   \vspace{2mm}
		   \end{subtable} \\
		   \begin{subtable}{0.48\textwidth}
			   \centering
			   \caption{ResNet-56}
			   \begin{tabular}{c|cccc}
				   Method & \multicolumn{4}{c}{Pruning rate} \\\hline
						  & 50\% & 75\% & 90\% & 95\% \\\hline
						   Random mask & \textbf{91.2} & 90.0 & 87.3 & 84.6 \\
						  Random re-init. & 90.3 & 89.1 & 86.7 & 84.5 \\
						  Gibbs pruning & \textbf{91.2} & \textbf{90.8} & \textbf{89.8} & \textbf{88.5} \\
					  \end{tabular}
				  \end{subtable}
				  \label{t:random}
	   \end{table}

As previously discussed, many pruning methods do not produce a higher accuracy than if the pruned network is re-initialized and retrained~\cite{liu2019rethinking}, calling into question the utility of training large networks and then pruning. To assess this phenomenon with Gibbs pruning, we evaluate retraining pruned networks with randomly re-initialized weights. For comparison, we also train networks from scratch using random pruning masks with the chosen sparsity.

Results are shown in Table \ref{t:random}. We find that random re-initialization does not meet the performance of our proposed method, and is more similar to training with a random mask. These results suggest that Gibbs pruning acts less as a search over possible network architectures, and instead adapts more dynamically along with weight values throughout training, resulting in a final representation that is more tuned to particular weights than other methods are.

\subsection{Structured Pruning Results}
\label{sec:ev-struct}

We also evaluate performance for structured convolutional neural network pruning. We consider pruning individual kernels as well as entire filters of size \(K\times K\times C\), where \(K\) is the kernel width and \(C\) is the number of input channels for the layer being pruned. Generally speaking, kernel-wise pruning can achieve higher accuracies for the same number of parameters, but might be harder to optimize performance for, compared to filter-wise pruning.

We compare the quadratic Hamiltonian (\ref{eq:struct_ham}) to the other Hamiltonians we proposed for structured pruning: the binary Hamiltonian (\ref{eq:bin}) and the linear Hamiltonian (\ref{eq:struct_ham_lin}). Results are shown in Table \ref{t:results_structured}. For both cases, the quadratic Hamiltonian produced the best result, so we use it for all other experiments.

In practice, we find that moving to kernel- or filter-wise pruning with the chosen Hamiltonians does not significantly affect the convergence of pruning masks as a function of \(\beta\). We therefore use the same \(\beta\) schedule as in unstructured pruning, annealing from 0.7 to 10\,000 over the first 128 epochs. Given this schedule, we manually tune \(c\). We find that a value around \(0.01\) produces the best final results, so this value is used for all subsequent experiments.

\begin{table}[t!]
	\centering
	\caption{Comparison of Hamiltonians for structured pruning.}
	\label{t:results_structured}
	\vspace{-2mm}
	\begin{subtable}{0.48\textwidth}
		\centering
		\caption{Kernel-wise pruning}
		\begin{tabular}{c|cc}
			Hamiltonian & \makecell{ResNet-20\\Accuracy} & \makecell{ResNet-56\\Accuracy} \\\hline
			Binary (\ref{eq:bin}) & 83.5 & 87.7 \\
			Linear (\ref{eq:struct_ham_lin}) & 82.0 & 86.5 \\
			Quadratic (\ref{eq:struct_ham}) & \textbf{84.8} & \textbf{88.2}
		\end{tabular}
		\vspace{2mm}
	\end{subtable} \\
	\begin{subtable}{0.48\textwidth}
		\centering
		\caption{Filter-wise pruning}
		\begin{tabular}{c|cc}
			Hamiltonian & \makecell{ResNet-20\\Accuracy} & \makecell{ResNet-56\\Accuracy} \\\hline
			Binary (\ref{eq:bin}) & 82.3 & 87.0 \\
			Linear (\ref{eq:struct_ham_lin}) & 79.7 & 86.3 \\
			Quadratic (\ref{eq:struct_ham}) & \textbf{83.5} & \textbf{87.5}
		\end{tabular}
	\end{subtable}
\end{table}

For kernel-wise Gibbs pruning, each neighbourhood is small enough that the partition function is practically computable, so we sample directly from the Gibbs distribution. For filter-wise pruning, we use chromatic Gibbs sampling as described in Section~\ref{sec:struct-sampling}. We run the Markov chain for 50 iterations, as further iterations were not observed to affect performance in practice.

\begin{figure}[t!]
	\centering
	\begin{subfigure}{0.48\textwidth}
	\includegraphics[width=\textwidth]{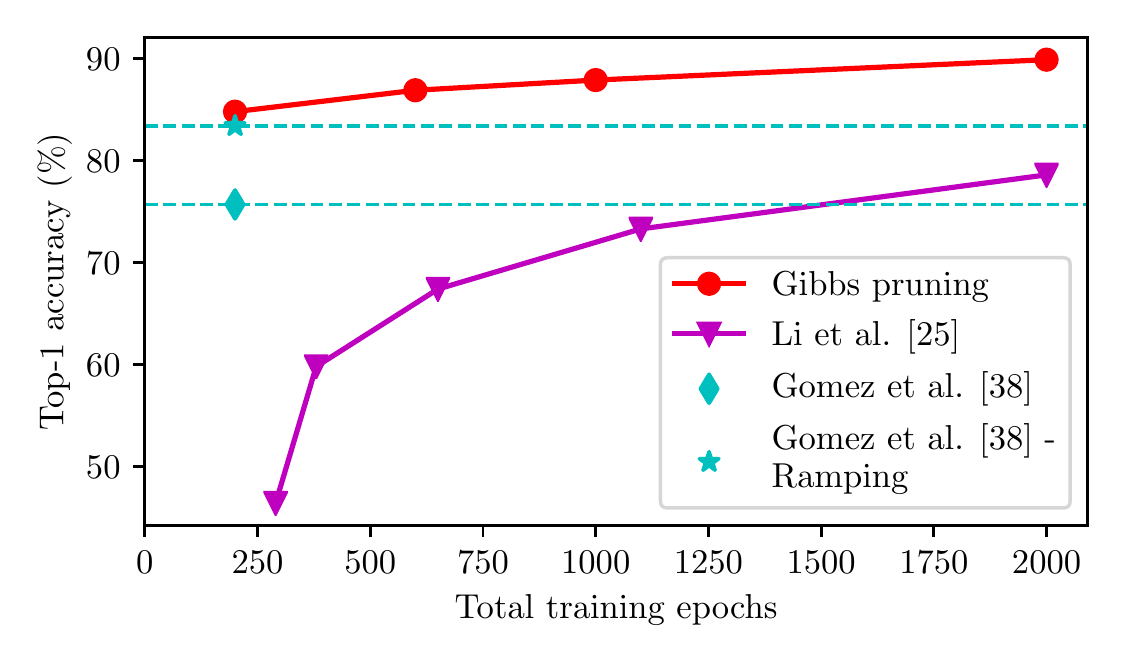}
	\label{fig:resnet20kernelscatter}
	\vspace{-7mm}
	\caption{ResNet-20}
	\end{subfigure}

	\begin{subfigure}{0.48\textwidth}
	\includegraphics[width=\textwidth]{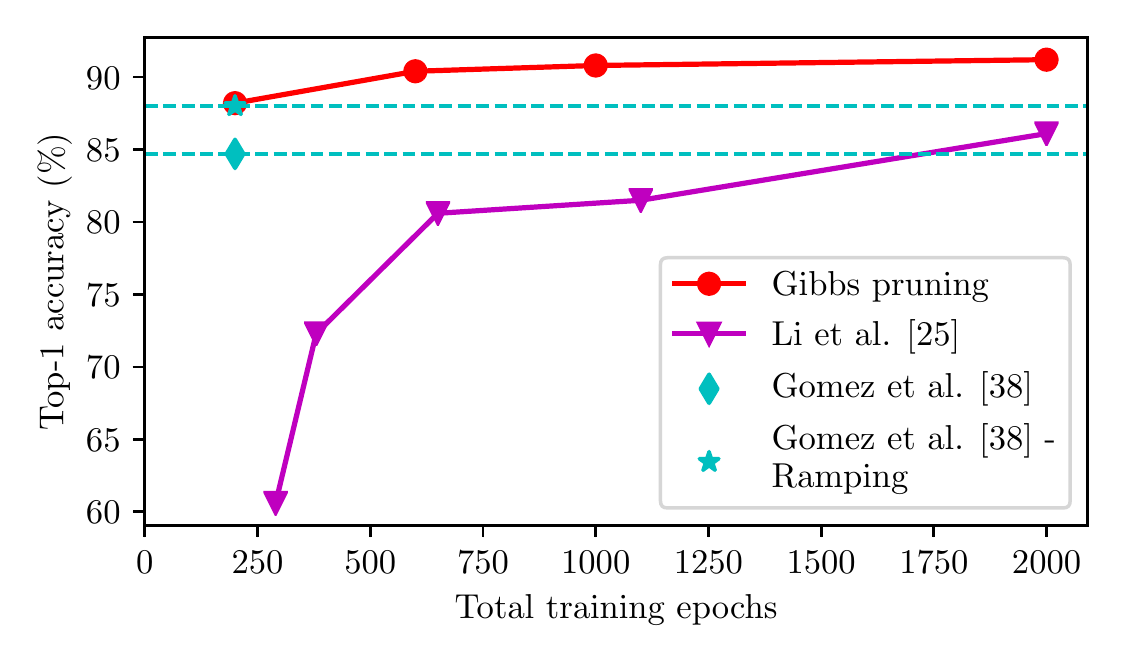}
	\label{fig:resnet56kernelscatter}
	\vspace{-7mm}
	\caption{ResNet-56}
	\end{subfigure}
	\caption[Comparison of Gibbs pruning to other kernel-wise structured pruning methods.]{Comparison of Gibbs pruning to other kernel-wise structured pruning methods. Dotted lines are shown for comparison on methods that do not require additional training epochs. }
	\label{fig:resnetkernelscatter}
\end{figure}

\begin{figure}[t!]
	\centering
	\begin{subfigure}{0.48\textwidth}
	\includegraphics[width=\textwidth]{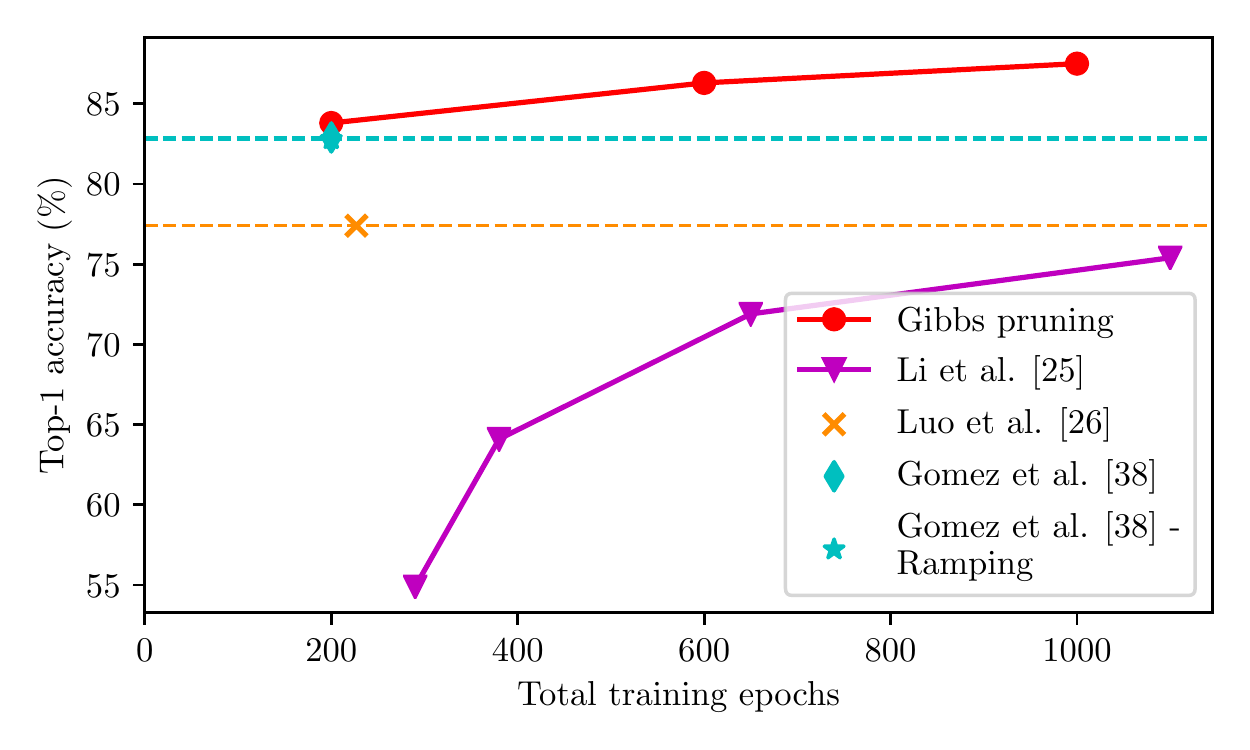}
	\label{fig:resnet20filterscatter}
	\vspace{-7mm}
	\caption{ResNet-20}
	\end{subfigure}

	\begin{subfigure}{0.48\textwidth}
	\includegraphics[width=\textwidth]{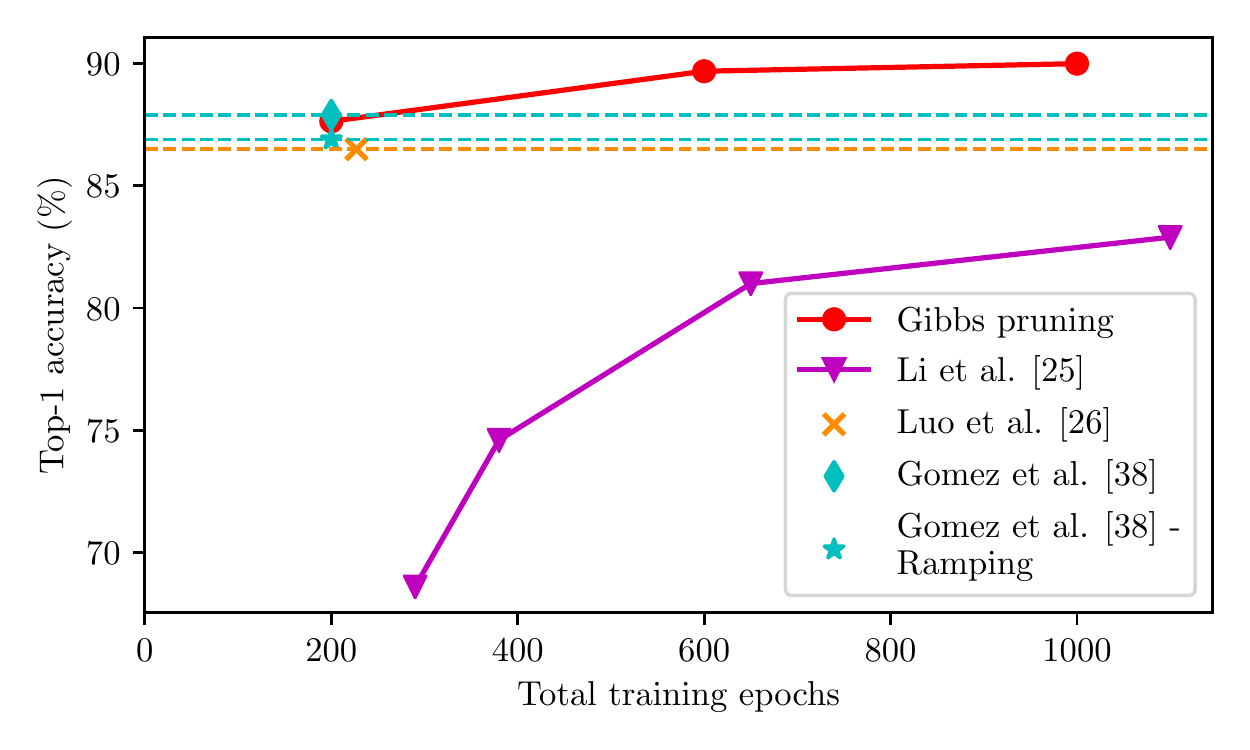}
	\label{fig:resnet56filterscatter}
	\vspace{-7mm}
	\caption{ResNet-56}
	\end{subfigure}
	\caption[Comparison of Gibbs pruning to other filter-wise structured pruning methods.]{Comparison of Gibbs pruning to other filter-wise structured pruning methods. Dotted lines are shown for comparison on methods that do not require additional training epochs. }
	\label{fig:resnetfilterscatter}
\end{figure}

We compare Gibbs pruning to three established structured pruning methods. One is targeted dropout~\cite{gomez2019learning}, implemented as previously described, but dropping entire kernels or filters. For filter-wise pruning, we set its hyperparameters to the most effective values described for unit dropout: \(\alpha=0.9, \gamma=0.75\). We also test the ramping targeted dropout variant as previously described.

Another method we evaluate is iteratively pruning filters or kernels based on their \(l_1\)-norm and retraining, as proposed by Li et al. in \cite{li2017pruning}. We use the same iterative schedules of pruning and fine-tuning as previously described for \cite{han2015learning} to test different total training lengths.

Finally, we evaluate ThiNet pruning, as proposed by Luo et al.~\cite{luo2017thinet}. This method greedily prunes channels to minimize changes in activations over an evaluation dataset. Pruning a channel is equivalent to pruning a filter in the previous layer, making channel-wise and filter-wise pruning comparable. ThiNet does not prune the \(1\times1\) convolution layers used for linear projection, which we also omit for filter-wise Gibbs pruning for comparison.

Kernel-wise pruning results are shown in Figure~\ref{fig:resnetkernelscatter} and filter-wise pruning results are shown in Figure~\ref{fig:resnetfilterscatter}. Note that a lower pruning rate of 75\% is used in evaluating filter-wise pruning methods, since filter-wise pruning at high rates results in much lower accuracy. In both types of structured pruning, Gibbs pruning outperforms other methods, and stretching the learning rate and annealing schedules proves to be effective in additionally improving accuracy.

\subsection{Analysis}
\label{sec:ev-summary}

After evaluating our proposed methods, we find that we obtain the best results with the Hamiltonians given by (\ref{eq:a2}) for unstructured pruning and (\ref{eq:struct_ham}) for structured pruning. We compare to a number of existing pruning methods and find that Gibbs pruning outperforms them. In particular, we establish a new state-of-the-art result for unstructured pruning on CIFAR-10 with ResNet-56. We also demonstrate that, unlike the methods evaluated in \cite{liu2019rethinking}, Gibbs pruning outperforms a network trained with the same pruning mask but with random weights, showing that it effectively uses network weight values in selecting pruning masks.

The following reasons could explain why our proposed method outperforms existing methods. First of all, many previous works did not test their proposed methods on sufficiently challenging tasks, either using neural networks with large dense layers or using relatively low pruning rates, meaning that their actual efficacy compared to other state-of-the-art methods was unclear. Most existing methods also do not co-adapt weights and pruning masks during training, meaning that they do not find architectures that are particularly well-adapted to the parameter values in the network, as shown in~\cite{liu2019rethinking}. Finally, our novel uses of stochastic regularization for pruning and schedule stretching are significant changes from established approaches, and they have proven especially effective.

\section{Conclusion}
\label{chap:conclusion}

We introduce a novel family of methods for neural network pruning based on Gibbs distributions that achieves high pruning rates with little reduction in accuracy. It can be used for either unstructured or structured pruning, and the general framework can be adapted to a wide range of pruning structures. We compare it to various established pruning methods and find that it outperforms them. The success of our proposed methods shows the efficacy of simultaneously training and pruning a network and using stochasticity to promote robustness under pruning.

Future work could explore other types of structured pruning that can be expressed within this framework, such as limiting the number of non-zero values in convolutional kernels or structured pruning of recurrent neural network layers. Results could also be refined by determining an optimal pruning rate for each layer given a desired global pruning rate, rather than just using the same rate for each layer. Gibbs pruning could also be combined with other network compression methods like quantization to obtain highly efficient neural networks in practice.

	\bibliographystyle{IEEEbib}
    \bibliography{post-globecom}
\end{document}